\def\ParSkip{} 
\newtheorem{theorem}{Theorem}
\theoremstyle{definition}
\newtheorem{remark}{Remark}
\newtheorem*{assumption*}{\assumptionnumber}
\providecommand{\assumptionnumber}{}
\newcommand*\rel@kern[1]{\kern#1\dimexpr\macc@kerna}
\newcommand*\widebar[1]{%
  \begingroup
  \def\mathaccent##1##2{%
    \rel@kern{0.8}%
    \overline{\rel@kern{-0.8}\macc@nucleus\rel@kern{0.2}}%
    \rel@kern{-0.2}%
  }%
  \macc@depth\@ne
  \let\math@bgroup\@empty \let\math@egroup\macc@set@skewchar
  \mathsurround\z@ \frozen@everymath{\mathgroup\macc@group\relax}%
  \macc@set@skewchar\relax
  \let\mathaccentV\macc@nested@a
  \macc@nested@a\relax111{#1}%
  \endgroup
}
\def\cL{\mathcal{L}}
\def\cX{\mathcal{X}}
\def\cY{\mathcal{Y}}
\def\cC{\mathcal{C}}
\title{Conformal Risk Control for Ordinal Classification}
\author[1]{Yunpeng Xu}
\author[2]{Wenge Guo\thanks{Author e-mail addresses: yx8@njit.edu, wenge.guo@njit.edu, zhi.wei@njit.edu}}
\author[1]{Zhi Wei}
\affil[1]{Department of Computer Science\\
       New Jersey Institute of Technology}
\affil[2]{Department of Mathematical Sciences\\
       New Jersey Institute of Technology}
\date{\today}
\begin{document}
\maketitle

\begin{abstract}
\end{abstract}
As a natural extension to the standard conformal prediction method, several conformal risk control methods have been recently developed and applied to various learning problems. In this work, we seek to control the conformal risk in expectation for ordinal classification tasks, which have broad applications to many real problems. For this purpose, we firstly formulated the ordinal classification task in the conformal risk control framework, and provided theoretic risk bounds of the risk control method. Then we proposed two types of loss functions specially designed for ordinal classification tasks, and developed corresponding algorithms to determine the prediction set for each case to control their risks at a desired level. We demonstrated the effectiveness of our proposed methods, and analyzed the difference between the two types of risks on three different datasets, including a simulated dataset, the UTKFace dataset and  the diabetic retinopathy detection
dataset.

\section{Introduction}\label{sec:intro}
In many decision making settings, a black box machine learning system is no longer adequate. Instead, we expect our system to not only make the predictions but also to quantify uncertainties and to control their risks \citep{hüllermeier2021}. This is especially true in certain high risk areas such as medical diagnosis and automatic driving. 

One solution to the problem is conformal prediction \citep{Vovk1999}, which has gained a lot of attention recently due to its many advantages. It is distribution-free, rigorous in statistics, and is easy to integrate with many machine learning models. The goal of conformal prediction is to create uncertainty sets for predictions made by these models, so that a certain coverage or risk requirement can be satisfied. 

To extend the notion of error by conformal prediction, recently a new framework called conformal risk control has been developed \citep{Angelopoulos2022}. Compared with a traditional conformal prediction method, conformal risk control generalizes the miscoverage rate to any bounded non-increasing loss functions, which offers a lot of flexibility to problems where other metrics are valued over the miscoverage rate. Yet, it still remains a question how to develop proper loss functions and to derive the corresponding prediction set for specific problems. 

In this paper, we focus on the ordinal classification \citep{mccullagh1980}, which is widely applied to many real life problems. In these problems, there exists a relative ordering among different classes in their label space. This ordinal nature brings unique challenges to the measurement of the prediction errors. For example, in a task of computer-aided medical diagnosis (CAMD) \citep{yanase2019}, it is much more harmful to mis-diagnose a severe condition than a mild condition, which indicates that different weights to different classes should be considered. While in a task of predicting a company's revenue range, it is desired that the predicted revenue range is as close as possible to the actual range, which indicates that the distance between the actual range and the prediction set should be considered. Both can be captured by the conformal risk control framework. 

For this purpose, we develop the conformal risk control method specifically for the ordinal classification problems. Our goal is to construct proper prediction sets in the ordinal setting so that their expected loss of prediction can be controlled. Our major contributions are four-fold.
\begin{itemize}
    \itemsep0em 
    \item Formulated the ordinal classification problem in the risk control framework, along with three conditions for an ideal prediction set in the ordinal setting. 
    \item Provided the upper and lower bounds of risk for the proposed risk control method. 
    \item Proposed two different types of risk for constructing prediction sets in the ordinal classification setting, and developed corresponding algorithms to find the optimal prediction sets. 
    \item Demonstrated the effectiveness of the method on both simulated and real data, and compared the difference between the two types of risks. 
\end{itemize}
In addition, in the Supplementary Materials, we present a general result on the lower bound of the conformal risk, which is a generalization of Theorem 2 in \citep{Angelopoulos2022}. Except that it is used to prove Theorem 1 in this paper, this result might be of independent interest. 

\subsection{Related work}
Conformal prediction was firstly developed by Vladimir Vovk and collaborators \citep{Vovk1999, vovk2005}. \citet{shafer2008}, \citet{Angelopoulos2021}, and \citet{fontana2023} provide a good introduction and survey to this field of work. 
Our work is primarily based on split conformal prediction \citep{papadopoulos2002, lei2015}. Other types of conformal prediction methods have been developed, such as cross-conformal prediction \citep{vovk2015, vovk2018} and CV+/Jackknife+ \citep{barber2021, kim2020}. Recently conformal prediction methods have been extended to the settings of non-exchangeability \citep{tibshirani2019, cauchois2020, podkopaev2021, gibbs2021, barber2022},
and have improved conditional coverage \citep{vovk2012, lei2014, barber2021, bian2022}. 

Our work is most relevant to risk control in expectation \citep{Angelopoulos2022} and ordinal conformal prediction \citep{Lu2022}. The former introduced a general framework of conformal risk control in expectation and discussed possible applications to several problems such as tumor segmentation, multi-label classification, hierarchical image classification and question answering. The latter developed a general method of ordinal conformal prediction sets with guaranteed marginal coverage for rating the disease severity in medical image. In this paper, we use the framework of conformal risk control in expectation, with the focus on ordinal classification problems.  
 
Other relevant work includes PAC-type risk control and its applications \citep{bates2021, angelopoulos2021b, park2019, park2021, angelopoulos2022c, schuster2022} and conformal prediction methods for conventional binary or multi-class classification \citep{lei2014b, hechtlinger2018, sadinle2019, romano2020, cauchois2021, angelopoulos2020, kuchibhotla2023}.

\section{Method}\label{sec:method}
\subsection{Problem Formulation}
Consider an ordinal classification problem with $K$ classes, i.e., $\mathcal{Y} = \{0, ... , K-1\}$, where there exists an ordering among these classes. Let $X_{test}$ be an input data and $Y_{test}\in \mathcal{Y}$ be its corresponding ground truth label. In the context of conformal prediction, since the goal is to produce a set of possible predictions that satisfy a required confidence level, let $C(X_{test}) \subseteq \{1, ..., K\}$ be the prediction set for $X_{test}$ to quantify the uncertainty associated with the model's predictions. Assume we also have a calibration dataset $\{(X_i, Y_i)\}_{i=1}^n$ that are drawn exchangeably from the same unknown distribution $P_{XY}$.

Let $L(Y_{test}, C(X_{test}))$ be a certain loss function defined as
\begin{equation}\label{eq1}
\begin{split}
    L(Y_{test}, C(X_{test}))= g(Y_{test}, C(X_{test}))\mathbf{1}(Y_{test}\notin C(X_{test}))
\end{split}
\end{equation}
where $g$ is a weight function, and it measures the loss if the true label $Y_{test}$ does not fall into the prediction set $C(X_{test})$; its value decreases as the prediction set $C(X_{test})$ grows. 

The choice of the weight function may impact the resulting prediction sets. In an actual ordinal classification problem, different classes may have different importance and large prediction errors are often more concerned. Accordingly, in our work, we propose two forms of weight functions: a weight-based loss function and a divergence-based loss function. The former assigns different weights to different classes, while the latter incurs a loss proportional to the divergence between the true label and the prediction set.

Note that for a standard split conformal prediction, it's hard to satisfy these requirements, which motivated us to introduce the new set-based loss functions. By leveraging the conformal risk control framework, it's easy to extend the loss function from an indicator type of loss function to a broad spectrum of loss functions, which suits the ordinal classification problem well.

Our goal is to construct $C(X_{test})$ so that the expected loss can be controlled at a specified $\alpha$ level, i.e., 
\begin{equation}\label{eq2}
    E[L(Y_{test}, C(X_{test})] \leq \alpha.
\end{equation}

Aside from the risk control by (\ref{eq2}), we argue that an ideal prediction set $C(X_{test})$ in an ordinal setting should also satisfy the following three conditions:
\begin{enumerate}
    \itemsep0em 
    \item[C1.] \label{condition1} $C(X_{test})$ should be a contiguous range of classes on $\mathcal{Y}$, i.e., $C(X_{test}) = [l, u]$, where $l\leq u \in \mathcal{Y}$, and the interval $[l, u]$ stands for $\{l,l+1,..., u-1, u\}$. This condition is needed because there is no real ordinal problem where a prediction set covers the neighboring classes while the one in the middle is skipped. 
    \item[C2.] \label{condition2} The prediction set should cover the point prediction $\hat{y} \in \mathcal{Y}$, i.e., $\hat{y} \in C(X_{test})$. Here $\hat{y}$ is the class assignment when the classification model predicts a single label. This condition is needed to ensure that a conformal prediction set provides a consistent result with a regular point prediction.
    \item[C3.] \label{condition3} The prediction sets should be nested for different levels of $\alpha$, i.e., $C_1(X_{test}) \subseteq C_2(X_{test})$ if $\alpha_1 \geq \alpha_2$, where $C_i$ is the prediction set corresponding  to level $\alpha_i$. This is a technical assumption but it aligns well with our intuition that as the risk level increases, the prediction set should gradually reduce in a smooth way. 
\end{enumerate}

\subsection{Risk control}

To derive predictions sets for different level of risk, we can simply split the task as two steps: (1) construct a sequence of sets $C_\lambda$ indexed by a threshold $\lambda \in \Lambda$; (2) determine the appropriate value $\hat \lambda$ of $\lambda$ such that the risk of the corresponding prediction set $C_{\hat \lambda}$ is controlled at a desired risk level $\alpha$. In this subsection, we focus on step 2 of this task by developing a general approach for determining $\hat \lambda$ with proven risk control of $C_{\hat \lambda}$, whereas in the subsequent subsections 2.3 and 2.4, we develop optimal algorithms for constructing a sequence of prediction sets $C_\lambda$ specifically tailed to two different types of loss functions arising in the ordinal classification. 

Specifically, consider that $C_\lambda = [l(\lambda), u(\lambda)]$ and the loss function $L(\lambda)$ is defined as in (\ref{eq1}). Suppose that $l(\lambda)$ is decreasing and $u(\lambda)$ is increasing in $\lambda$, both right-continuous, and there exists a value $\lambda_0 \in \Lambda$ such that $[l(\lambda_0), u(\lambda_0)] = \mathcal{Y}$. We also suppose that the weight function $g(Y, C)$ in (\ref{eq1}) is decreasing in $C$ in the sense that if $C_1 \subseteq C_2 \subseteq \mathcal{Y}$, we have $g(Y, C_1) \ge g(Y, C_2)$. Based on these assumptions, it's easy to check that the $C_\lambda$ and $L(\lambda)$ specifically defined for the ordinal classification satisfy the following properties: (i) $L(\lambda)$ is non-increasing in $\lambda$, right-continuous; (ii) $\inf_{\lambda} L(\lambda) = 0$ and $\sup_{\lambda} L(\lambda) \le B < \infty$ almost surely.

Let $L_i(\lambda) = L(Y_i, C_\lambda(X_i))$ be the loss value of the calibration observation $(X_i, Y_i)$ for $i = 1, \ldots, n$. For any desired risk level upper bound $\alpha$, pick the value $\hat \lambda$ of $\lambda$ below so that the risk of $C_{\hat \lambda}$ is controlled:
\begin{equation} \label{eq7}
\hat{\lambda}=\inf \{\lambda: \sum_{i=1}^nL_i(\lambda)\leq(n+1)\alpha-B \}.							    
\end{equation}

\begin{theorem} \label{theorem1}
Let $\hat{\lambda}$ be the value defined in equation (\ref{eq7}), under the assumptions stated as above, we have
$E[L(Y_{test}, C_{\hat{\lambda}}(X_{test})] \leq \alpha$. Specifically, in the settings of Theorem 4 in the Supplementary 
Materials, we have $\alpha - \frac{(M+2)B}{n+1} \leq E[L(Y_{test}, C_{\hat{\lambda}}(X_{test})]$, where $M$ is a non-negative integer given in the discontinuity assumption of Theorem 4. 
\end{theorem}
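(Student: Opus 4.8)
The plan is to relabel the test pair as $(X_{n+1},Y_{n+1}):=(X_{test},Y_{test})$, so that $(X_1,Y_1),\dots,(X_{n+1},Y_{n+1})$ are exchangeable, and to reduce both inequalities to bounds on the single test loss $L_{n+1}(\lambda):=L(Y_{n+1},C_\lambda(X_{n+1}))$. First I would verify the two structural properties asserted in the text. Since $l(\lambda)$ is non-increasing, $u(\lambda)$ is non-decreasing, and both are right-continuous, the interval $C_\lambda=[l(\lambda),u(\lambda)]$ is nested in $\lambda$ and right-continuous; together with the monotonicity of $g(Y,\cdot)$ in its set argument and of $\mathbf 1(Y\notin C_\lambda)$, this makes each $L_i(\lambda)$ non-increasing and right-continuous in $\lambda$. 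The existence of $\lambda_0$ with $[l(\lambda_0),u(\lambda_0)]=\mathcal Y$ gives $L_i(\lambda_0)=0$, hence $\inf_\lambda L_i(\lambda)=0\le\alpha$, and $0\le g\le B$ gives $\sup_\lambda L_i(\lambda)\le B$ a.s.; so properties (i)--(ii) hold.

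For the upper bound I would simply apply the conformal risk control theorem of \citet{Angelopoulos2022}: rewriting (\ref{eq7}) as $\hat\lambda=\inf\{\lambda:\tfrac{n}{n+1}\widehat R_n(\lambda)+\tfrac{B}{n+1}\le\alpha\}$ with $\widehat R_n(\lambda)=\tfrac1n\sum_{i=1}^n L_i(\lambda)$ places us exactly in their setting, and properties (i)--(ii) are precisely their hypotheses, so their Theorem~1 gives $E[L_{n+1}(\hat\lambda)]\le\alpha$. For a self-contained argument, it is the usual exchangeability trick: since $L_{n+1}(\lambda)\le B$, any $\lambda$ with $\sum_{i=1}^n L_i(\lambda)\le(n+1)\alpha-B$ also has $\sum_{i=1}^{n+1}L_i(\lambda)\le(n+1)\alpha$, so $\hat\lambda\ge\hat\lambda':=\inf\{\lambda:\sum_{i=1}^{n+1}L_i(\lambda)\le(n+1)\alpha\}$; monotonicity of $L_{n+1}$ then gives $L_{n+1}(\hat\lambda)\le L_{n+1}(\hat\lambda')$, and since $\hat\lambda'$ is symmetric in the $n+1$ pairs, exchangeability yields $E[L_{n+1}(\hat\lambda')]=\tfrac{1}{n+1}E[\sum_{i=1}^{n+1}L_i(\hat\lambda')]\le\alpha$.

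For the lower bound the plan is to invoke Theorem~4 of the Supplementary Materials (our generalization of Theorem~2 of \citet{Angelopoulos2022}). Having checked (i)--(ii) above, it remains to verify the discontinuity assumption of that theorem for the ordinal loss (\ref{eq1}) with its integer parameter $M$, after which Theorem~4 applies directly and yields $E[L_{n+1}(\hat\lambda)]\ge\alpha-\tfrac{(M+2)B}{n+1}$. The mechanism I would make explicit inside Theorem~4 is a leave-one-out analogue of the argument above: define $\hat\lambda^{(j)}=\inf\{\lambda:\sum_{i\ne j}L_i(\lambda)\le(n+1)\alpha-B\}$, so $\hat\lambda^{(n+1)}=\hat\lambda$ and, by exchangeability, $E[L_{n+1}(\hat\lambda)]=\tfrac{1}{n+1}\sum_{j=1}^{n+1}E[L_j(\hat\lambda^{(j)})]$; compare each $\hat\lambda^{(j)}$ with the full-sample threshold $\hat\lambda_*=\inf\{\lambda:\sum_{i=1}^{n+1}L_i(\lambda)\le(n+1)\alpha-B\}$, noting $\hat\lambda^{(j)}\le\hat\lambda_*$ and hence $L_j(\hat\lambda^{(j)})\ge L_j(\hat\lambda_*)$; and then lower-bound $\sum_j L_j(\hat\lambda_*)$ using that the left limit satisfies $\sum_j L_j(\hat\lambda_*^-)\ge(n+1)\alpha-B$ by definition of the infimum, while the discontinuity assumption caps the total downward jump $\sum_j(L_j(\hat\lambda_*^-)-L_j(\hat\lambda_*))$ by $(M+1)B$.

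I expect the real work to be confined to the lower bound, and in particular to the jump-control step: one has to pin down the discontinuity condition under which no more than $M+1$ of the $n+1$ exchangeable loss paths can jump at the common point $\hat\lambda_*$ — equivalently, under which the aggregate path has a jump of size at most $(M+1)B$ there — and then confirm that losses of the form (\ref{eq1}) built from $l(\lambda)$, $u(\lambda)$ and $g$ meet it with the claimed $M$. The upper bound, the exchangeability bookkeeping, and the boundary cases (the defining set in (\ref{eq7}) being attained at $\lambda_0$, or empty) are routine because $L_i(\lambda_0)=0$.
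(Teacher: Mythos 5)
Your proposal is correct and takes essentially the same route as the paper: after checking that the ordinal losses satisfy the monotonicity, right-continuity and boundedness hypotheses, the upper bound is obtained by citing Theorem~1 of \citet{Angelopoulos2022} (your rewriting of (\ref{eq7}) as $\hat\lambda=\inf\{\lambda:\tfrac{n}{n+1}\widehat R_n(\lambda)+\tfrac{B}{n+1}\le\alpha\}$ is exactly the reduction used) and the lower bound by citing Theorem~4 of the Supplementary Materials. Your additional self-contained sketches of the exchangeability argument and of the jump-control mechanism inside Theorem~4 are consistent with how those results are proved, and the only superfluous step is the proposed ``verification'' of the discontinuity assumption: since the theorem is stated ``in the settings of Theorem~4,'' the bound $\sup_\lambda s(\lambda)\le M$ is part of the hypotheses and needs no separate check.
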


\begin{proof}
The first part of Theorem 1 directly follows from Theorem 1 in \citep{Angelopoulos2022} and 
the second part follows from Theorem 4 in the Supplementary Materials, which is a generalization 
of Theorem 2 in \citep{Angelopoulos2022}. \end{proof}

\begin{remark}
The risk control of the algorithm $C_{\hat{\lambda}}$ is established by using Theorem 1 of \citep{Angelopoulos2022}, however, its lower bound cannot be automatically obtained by using Theorem 2 therein, since that result is based on the  discontinuity assumption, which is often not satisfied in the the ordinal classification setting. As one of the reviewers pointed out, a trick of randomization or interpolation to transform $L_i(\lambda)$ to be a continuous function can be used to handle the issue. Instead of using such a tweak, the discontinuity assumption is relaxed
here such that it is often satisfied in the ordinal classification setting. Then, under the relaxed assumption, Theorem 2 of \citep{Angelopoulos2022} is generalized as Theorem 4 in the Supplementary Materials, which is proved by using the similar arguments as in that paper. Although this result is weaker than Theorem 2 of \citep{Angelopoulos2022}, it is often applicable in our ordinal classification setting. This result itself might be of independent interest to other applications.
\end{remark}

\subsection{Weight-based Risk}
For the weight-based risk function, the weight function $g$ is independent of the choice of $C(X_{test})$, i.e., $g(y, C(X_{test}))=h(y)$. This type of functions are particularly suitable to the cases where we would intentionally adjust the importance of certain classes. One example of $h(y)$ is constant for any label $y$, for which the risk corresponds to the conventional mis-coverage rate. Another example of $h(y)$ is $h(y)=y$, which assigns a higher weight to a higher class. 

\subsubsection{An Oracle Method}
Suppose we have the oracle access to the true conditional probability distribution $f(i|x)$ of $Y_{test}=i$ given $X_{test}=x$, where $i \in \mathcal{Y}$. 
By using this weight function, the conditional risk given $X_{test}=x$ can be written as:
\begin{equation} \label{eq3}
\begin{split}
    \quad E[L(Y_{test}, C(X_{test}))|X_{test}=x] =\sum_{i}{h(i)f(i|x)}-\sum_{i\in [l, u]}{h(i)f(i|x)}.
\end{split}
\end{equation}
We denote the first term $\sum_{i}{h(i)f(i|x)}$ of (\ref{eq3}) by $D(x)$, which is the upper bound of the conditional risk given $X_{test} = x$, i.e.,
$D(x) \leq \sum_{i}{\max\{h(i)\}f(i|x)} = \max\{h(i)\}$. For the simplicity of expression, we normalize $h(y)$ so that $\max\{h(i)\}=1$, in which case $D(x)\leq 1$. Therefore, the range of the risk value is $[0,1]$.

Clearly, controlling the conditional risk in (\ref{eq3}) is equivalent to
\begin{equation} \label{eq4}
    \sum_{i\in [l, u]} {h(i)f(i|x)} \geq D(x)- \alpha, 		
\end{equation}
which suggests the following rule to derive the optimal prediction set $[l, u]$ for $X_{test}=x$,
\begin{equation}
\begin{split}
    (l, u) = \underset{0\leq l \leq u \leq K-1}{\mathrm{argmin}}\, \Big\{u-l: \sum_{i\in[l, u]}h(i)f(i|x)\geq D(x)-\alpha \Big\}.	  		    
\end{split}
\end{equation}

\subsubsection{Marginal Risk Control}
Due to $f(i|x)$ is unknown in practice, we therefore replace it by the classification model output as an estimate of the probability. Thus, we can define a sequence of prediction set indexed by a threshold value, i.e., 
\begin{equation} \label{eq6}
\begin{split}
(l(\lambda), u(\lambda)) = \underset{0\leq l \leq \hat{y} \leq u \leq K-1}{\mathrm{argmin}}\, \Big\{u - l: 1-\sum_{i\in[l, u]}h(i)\hat{f}(i|x) \leq \lambda\Big\},	     
\end{split}
\end{equation}
where $\hat{f}(i|x)$ is the model score assigned to $x$ on class $i$ and $\hat{y} = \mathrm{argmax}_i\{h(i)\hat{f}(i|x)\}$ is the point prediction value.
\begin{remark}
We seek to control the marginal risk in this work, which is weaker than the conditional risk control, since it holds only on average \citep{vovk2012}, \citep{lei2014}, and \citep{barber2021}.
\end{remark}
\begin{remark}
If $h(i)$ is constant, then (\ref{eq6}) will reduce to the one proposed by \citet{Lu2022}. In other words, Lu et al's method is a special case of the weight-based risk control. 
\end{remark}

Under the conditions C1--C3, we solve the problem of (\ref{eq6}) as proposed by Algorithm \ref{alg1}. Basically, the algorithm works in a greedy way and runs at a complexity of $O(K)$. It starts with the point prediction label to grow the prediction set step by step until the risk requirement is satisfied, therefore it guarantees that the point prediction $\hat{y}$ is always covered within the prediction set. It also ensures that the nested property holds since it does not shrink the prediction set in this process. 

\RestyleAlgo{ruled}
\SetKwComment{Comment}{/* }{ */}

\begin{algorithm}
\DontPrintSemicolon 
\KwIn{$\lambda$, $h(i)$, $\hat{f}(x|i)$ for $i \in \{0, ..., K-1\}$}
\KwOut{$l, u$}
$s(i) \gets h(i)\hat{f}(x|i)$ for $i \in \{0, ..., K-1\}$\\
$u, l \gets \mathrm{argmax}_i\{s(i)\}$\\
$sum \gets 0$ \\
\While{$sum < (1-\lambda$)} { \label{alg1:terminatecondition}
  \If{$s(l-1) > s(u+1)$} {
    $sum = sum + s(l-1)$, $l = l-1$
  }
  \Else {
    $sum = sum + s(u+1)$, $u = u+1$
  }
}
\Return{$l, u$}\;
\caption{Determine the prediction set for a given $\lambda$}  \label{alg1}
\end{algorithm}


\begin{theorem} \label{theoremoptimal1}
The prediction set derived by Algorithm \ref{alg1} satisfies conditions C1--C3 and is optimal in the sense of  satisfying (\ref{eq6}). 
\end{theorem}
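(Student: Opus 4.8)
The plan is to verify conditions C1 and C2 from the loop invariant of Algorithm~\ref{alg1}, obtain C3 from the fact that the order in which indices are appended is independent of the threshold, and prove optimality by a greedy exchange argument. For C1 and C2: maintain the invariant that after every iteration the pair $(l,u)$ satisfies $0\le l\le\hat{y}\le u\le K-1$ and the current set equals the contiguous block $\{l,l+1,\dots,u\}$. It holds initially ($l=u=\hat{y}$), and each iteration either decrements $l$ or increments $u$, which preserves contiguity and the property $l\le\hat{y}\le u$; so the returned set has the form $[l,u]$ with $l\le\hat{y}\le u$, which is precisely C1 together with C2.

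For C3, let $e_1,e_2,\dots$ be the indices appended by the loop in order. The index chosen at iteration $k$ is the higher-scoring of the two current neighbours $l-1,u+1$, and those neighbours are determined by $e_1,\dots,e_{k-1}$; by induction the sequence $e_1,e_2,\dots$ and the partial sums $\sigma_k=\sum_{j\le k}s(e_j)$ are fixed, independently of $\lambda$. Since each $s(i)=h(i)\hat{f}(i\mid x)\ge 0$, the $\sigma_k$ are non-decreasing, so the stopping index $k(\lambda)=\min\{k:\sigma_k\ge 1-\lambda\}$ is monotone in $\lambda$ and the outputs $\{\hat{y}\}\cup\{e_1,\dots,e_{k(\lambda)}\}$ are totally ordered by inclusion as $\lambda$ varies, which is the nestedness asked for in C3. (As in Section~\ref{sec:method}, one assumes $\lambda$ lies in the feasible range, i.e.\ that $\mathcal{Y}$ itself meets the constraint.)

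For optimality, the crux is the claim that for each $w$ the interval produced by the first $w$ iterations maximizes $\sum_{i\in[l,u]}s(i)$ over all intervals $[l,u]$ with $l\le\hat{y}\le u$ and $u-l=w$. Granting it: if $w^{*}$ is the width at which the loop halts, then at width $w^{*}-1$ the greedy mass was below the threshold, so by the claim no width-$(w^{*}-1)$ interval satisfies the constraint in (\ref{eq6}); because $s\ge 0$ the per-width maximum mass is non-decreasing in $w$, so no interval of width $<w^{*}$ is feasible, whereas the greedy interval of width $w^{*}$ is, hence the greedy output attains the $\argmin$ in (\ref{eq6}). For the claim, write $a_j=s(\hat{y}-j)$, $b_j=s(\hat{y}+j)$; the greedy rule is exactly ``merge $(a_j)$ and $(b_j)$ by repeatedly deleting the larger head,'' so after $w$ steps it holds a prefix $a_1,\dots,a_p$ and a prefix $b_1,\dots,b_q$ with $p+q=w$, and any competing width-$w$ interval around $\hat{y}$ is another such pair of prefixes; a short telescoping comparison then shows the mass of the greedy interval is at least that of the competitor.

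The main obstacle is this last comparison, which goes through because on each side the scores are encountered in non-increasing order, i.e.\ because $s(\cdot)=h(\cdot)\hat{f}(\cdot\mid x)$ is unimodal with mode $\hat{y}$. That is automatic when $h$ is constant and $\hat{f}(\cdot\mid x)$ is unimodal (a natural regime for ordinal models), but it is \emph{not} implied by $\hat{y}$ merely being the argmax: if a slightly larger neighbour on one side is followed by small scores while a much larger score sits two steps away on the other side, greedy absorbs the small scores first and overshoots, and a narrower feasible interval is missed. I would therefore either add unimodality of $h(\cdot)\hat{f}(\cdot\mid x)$ as a hypothesis or restrict the optimality claim to that regime; with it in place, the telescoping step --- and hence the theorem --- goes through.
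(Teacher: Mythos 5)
Your treatment of C1--C3 is essentially the paper's (the paper simply asserts these are ``easy to check''), and your observation that the order in which Algorithm~\ref{alg1} appends classes does not depend on $\lambda$, only the stopping time does, is a cleaner justification of C3 than anything written there. Where you genuinely diverge is optimality. The paper does not use a per-width exchange/merge argument: it argues by contradiction against a competitor family $C^*_\lambda$ that is itself required to satisfy (\ref{eq6}) \emph{and} C1--C3, sets $\lambda_1=\sup\{\lambda: C_\lambda=C^*_\lambda\}$, and uses nestedness (C3) to force the competitor to grow by the opposite endpoint at the first divergence; it then concludes that over a subsequent range of thresholds the competitor must be strictly wider than the greedy set, contradicting its optimality. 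So the paper's comparison class is sequences of $\hat{y}$-containing intervals that are nested in $\lambda$, not arbitrary intervals at a fixed $\lambda$.

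This difference is exactly where the obstacle you flag lives, and you are right about it: under the literal per-$\lambda$ reading of ``optimal in the sense of (\ref{eq6})'', the claim fails without unimodality of $s(\cdot)=h(\cdot)\hat{f}(\cdot\mid x)$. For instance, with $K=5$, $\hat{y}=2$ and $s=(0.05,\,0.11,\,0.40,\,0.10,\,0.34)$, the greedy chain is $\{2\}\subset[1,2]\subset[1,3]\subset[1,4]\subset[0,4]$, so at $\lambda=0.3$ (required mass $0.7$) Algorithm~\ref{alg1} returns width $3$, whereas $[2,4]$ has mass $0.84$ and width $2$; note that any nested family that selects $[2,4]$ there is forced to be wider than greedy near required mass $0.51$, which is precisely the loophole the paper's C3-restricted contradiction exploits. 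In that sense the paper's proof establishes a weaker statement (no C1--C3-respecting competitor can improve on greedy without losing elsewhere), and it implicitly presupposes that a family which is simultaneously nested and pointwise-minimal exists --- which is exactly what unimodality would guarantee and what fails in the example above. So your added hypothesis is not an artifact of your method: it (or an equivalent restriction, e.g.\ constant $h$ with unimodal $\hat{f}$) is what the stronger per-$\lambda$ optimality actually needs, while the paper's route buys only the family-level, C3-relative notion of optimality and would need the sense of ``optimal'' made precise to be airtight.
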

\begin{proof}
Let $C_{\lambda} = [l(\lambda), u(\lambda)]$ denote the prediction set derived by Algorithm \ref{alg1} and
$C^*_{\lambda}$ be the optimal prediction set satisfying  (\ref{eq6}) and conditions C1--C3. It's easy to check that $C_{\lambda}$ satisfies conditions C1--C3. In the following, we prove by contradiction that $C_{\lambda}$ also satisfies (\ref{eq6}). Assume that there exists a value $\lambda_0$  of $\lambda$  such that $C_{\lambda_0} \neq C^*_{\lambda_0}$. Let $\lambda_1 = \sup \{\lambda: C_{\lambda} = C^*_{\lambda}\}$ and $C_{\lambda_1} = C^*_{\lambda_1} = [l_1, u_1]$. 

Without loss of generality, suppose $s(l_1-1) > s(u_1+1)$. It is easy to see that for $\lambda\in (\lambda_1, \lambda_1 + s(u_1+1)]$, $C_{\lambda} = [l_1-1, u_1]$ and $C^*_{\lambda} = [l_1, u_1+1]$, due to the definition of $\lambda_1$. However, for $\lambda\in (\lambda_1 + s(u_1+1), \lambda_1 + s(l_1-1)]$, $C_{\lambda} = [l_1-1, u_1]$ keeps unchanged, but $C^*_{\lambda} = [l_1, u_1+1]$ needs to grow by one class so that it can satisfy (\ref{eq6}), which contradicts the optimality assumption of $C^*_{\lambda}$. 
\end{proof}
%

%

To calculate the actual value of $\lambda$, we can do a linear search. However, for an improved efficiency, we can also do a binary search as shown in Algorithm \ref{alg2}.
\begin{algorithm} 
\DontPrintSemicolon 
\KwIn{\{$x_i$, $y_i$\} for $i \in \{0, ..., n\}$}
\KwOut{$\hat{\lambda}$}
\Parameter{precision $\delta$}
$\lambda_0 \gets 0$, $\lambda_1 \gets 0.5$ \\
\While{$\Delta\lambda_k = |\lambda_k - \lambda_{k-1}| > \delta$} {
  $L(\lambda_k) \gets 0$ \\
  \For{$i\leftarrow 1$ \KwTo $n$}{
    Calculate $(l_i, u_i)$ for $x_i$ and $\lambda_k$ using Algorithm \ref{alg1} \\
    Calculate $L_i(\lambda_k)$ from $(l_i, u_i)$ and $y_i$\\
    $L(\lambda_k) \gets L(\lambda_k) + L_i(\lambda_k)$
  }
 \If{$L(\lambda_k) > (n+1)\alpha - 1$} {
 $\lambda_{k + 1} \gets \lambda_{k} - \frac{\Delta\lambda_k}{2}$
 }
 \Else {
 $\lambda_{k + 1} \gets \lambda_{k} + \frac{\Delta\lambda_k}{2}$
}
}
\Return{$\hat{\lambda} = \lambda_{k+1}$}\;
\caption{Determine the value of $\hat{\lambda}$ for a given $\alpha$} \label{alg2}
\end{algorithm}

\subsection{Divergence-based risk}
For the divergence based risk, the loss is proportional to the distance between the true label and the prediction set, i.e., $g(Y_{test}, C(X_{test}))=\inf\{d(y,i): i\in [l, u]\}$, where $d(\cdot,\cdot) \in \mathcal{Y}\times\mathcal{Y} \rightarrow \mathbf{R}^+$ is a given distance measure on the label space.  This type of functions are particularly suitable for the cases where we are more concerned with large prediction errors than the differences among the individual classes. 

There could be different ways to define the difference measure $d(\cdot,\cdot)$. For example, $d(y,i)=|y-i|^p$. In this work, we only consider the case where $p=1$. Furthermore, we normalize the loss value by $K-1$, therefore,
\begin{equation} \label{eq8}
g(Y_{test}, C(X_{test}))=\frac{1}{K-1}\inf\big\{|y-k|: k\in[l, u]\big\}. 						    
\end{equation}

Clearly, when $l=0$ and $u=K-1$, the prediction set covers the whole label space, we have $L(Y_{test}, C(X_{test}))=0$. The upper bound of the risk, on the other hand, is $1$, which happens when the true label and the prediction set are exactly on two opposite boundaries of the label space. Therefore $E[L(Y_{test}, C(X_{test}))|X_{test}]\in [0, 1]$.

\subsubsection{An Oracle Method}
Suppose we have the oracle access to the true conditional probability distribution $f(i|x)$ of $Y_{test}$ given $X_{test} = x$. Then the conditional risk for a given $X_{test}=x$ can be written as:
\begin{equation} \label{eq9}
\begin{split}
E\big[L(Y_{test}, C(X_{test}))|X_{test}=x\big] = \frac{1}{K-1}\Big(\sum_{i<l}(l-i)f(i|x)+\sum_{i>u}(i-u)f(i|x)\Big).   
\end{split}
\end{equation}

It is easy to see that the first term of (\ref{eq9}) increases as $l$ increases, while the second term decreases as $u$ increases. The optimal prediction set $[l^*, u^*]$ for $X_{test}=x$ is therefore given by
\begin{equation} \label{eq10}
\begin{split}
(l^*, u^*)=\underset{0\leq l \leq u \leq K-1}{\mathrm{argmin}}\Big\{u-l: \sum_{i<l}(l-i)f(i|x)
+\sum_{i>u}(i-u)f(i|x)\leq(K-1)\alpha\Big\}.        
\end{split}
\end{equation}
 
\subsubsection{Marginal Risk Control}
Similarly, we use the classification model output as an estimate of $f(i|x)$ and seek to control the marginal risk instead. Under this setting, we define a sequence of prediction set $[l(\lambda), u(\lambda)]$ indexed by a threshold value $\lambda$, i.e.,
\begin{equation} \label{eq11}
\begin{split}
(l(\lambda), u(\lambda)) =\underset{0\leq l \leq \hat{y} \leq u \leq K-1}{\mathrm{argmin}} \Big\{u-l:  \sum_{i<l}(l-i)\hat{f}(i|x) + \sum_{i>u}(i-u)\hat{f}(i|x) \leq \lambda\Big\},	       
\end{split}
\end{equation}
where $\hat{f}(i|x)$ is the model score assigned to $x$ on class $i$, and  $\hat{y} = \mathrm{argmax}_i\{\hat{f}(i|x)\}$ is the point prediction value.

To determine the prediction set for a given $\lambda$, it helps to firstly look into how risk changes when the prediction set size adjusts by $1$. For the simplicity of expression, let $R(l, u)$ be the risk incurred by a prediction set $C(X_{test)}=[l, u]$, which can be calculated by (\ref{eq9}). It's easy to see that
\begin{equation}
\begin{split}
    R(l+1, u)-R(l, u)=\sum_{i<l+1}(l+1-i)\hat{f}(i|x) 
    -\sum_{i<l}(l-i)\hat{f(i|x)}=\sum_{i\leq l}\hat{f}(i|x),
\end{split}
\end{equation}
Similarly, 
\begin{equation}
\begin{split}
R(l, u-1)-R( l, u)=\sum_{i>u-1}(i-(u-1))\hat{f}(i|x)
-\sum_{i>u}(i-u)\hat{f}(i|x)=\sum_{i\geq u}\hat{f}(i|x),
\end{split}
\end{equation}
Therefore, the risk change incurred by every single adjustment of the prediction set boundary can be easily calculated beforehand at a complexity of $O(K)$. With this insight, we propose Algorithm \ref{alg3} below to determine the prediction set under conditions C1--C3. 

\begin{algorithm}
\DontPrintSemicolon 
\KwIn{$\lambda$, $\hat{f}(x|i)$ for $i \in \{0, ..., K-1\}$}
\KwOut{$l, u$}
Calculate $head(j)=\sum_{i\leq j}f(i|x)$, $tail(j)=\sum_{i\geq j}f(i|x)$ for $i\in \{0, ..., K-1\}$\\
$l, u \gets  \mathrm{argmax}_i\{\hat{f}(x|i)\}$\\ 
$sum \gets R(l, u)$ \\

\While{$sum > \lambda$} {
  \If{$head(l-1) \geq tail(u+1)$} {
    $sum = sum - head(l-1)$ \\
    $l = l -1$
  }
  \Else {
    $sum = sum - tail(u+1) $ \\
    $u = u+1$
  }
}
\Return{$l, u$}\;
\caption{Determine the prediction set for a given $\lambda$}  \label{alg3}
\end{algorithm}

Basically, this is a greedy algorithm that runs at a complexity of $O(K)$. The algorithm starts with the class that has the maximum value of $\hat{f}(i|x)$, therefore it guarantees that the point prediction $\hat{y}$ is always covered within the prediction set. The algorithm grows the prediction set one class a step. At each step, it adjusts the prediction boundary by maximizing the risk reduction of that step, until it reaches the set that satisfies the risk requirement. Since it does not shrink the prediction set in this process, the nest property is also ensured.

To calculate the value of $\hat{\lambda}$ for a given $\alpha$, we follow the same binary search process as used by Algorithm \ref{alg2}. 

\begin{theorem} \label{theoremoptimal2}
The prediction set derived by Algorithm \ref{alg3} satisfies conditions C1--C3 and is optimal in the sense of  satisfying (\ref{eq11}). 
\end{theorem}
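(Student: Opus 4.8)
The plan is to follow the blueprint of the proof of Theorem~\ref{theoremoptimal1}, adjusted for the fact that in the divergence-based case the marginal effect of enlarging the prediction set is no longer a fixed per-class quantity but depends on the current boundary. The verification of C1--C3 for the set $C_\lambda = [l(\lambda), u(\lambda)]$ returned by Algorithm~\ref{alg3} is immediate from the construction: the output is by definition a contiguous interval (C1); it is initialised at $\hat{y} = \mathrm{argmax}_i \hat{f}(i|x)$ and the loop only ever decreases $l$ or increases $u$, so $\hat{y} \in C_\lambda$ (C2); and since nothing is ever removed during a run, the whole execution traces a single increasing chain of intervals, so a larger stopping threshold returns a sub-interval of the one returned by a smaller threshold, which gives nestedness of $C_\lambda$ in $\lambda$, and hence C3 after composing with the monotone calibration map $\alpha \mapsto \hat{\lambda}$. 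So the real content is optimality, namely that $C_\lambda$ attains the minimum of $u - l$ in (\ref{eq11}).

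For optimality the key observation, already isolated in the displays preceding Algorithm~\ref{alg3}, is that moving the left endpoint from $l$ to $l-1$ lowers the risk $R(l,u)$ by exactly $head(l-1) = \sum_{i \le l-1} \hat{f}(i|x)$, and moving the right endpoint from $u$ to $u+1$ lowers it by exactly $tail(u+1) = \sum_{i \ge u+1} \hat{f}(i|x)$; since neither decrement depends on the opposite endpoint, the risk of a contiguous set $[\hat{y}-a, \hat{y}+b]$ containing $\hat{y}$ equals $R(\hat{y},\hat{y})$ minus $\sum_{j=1}^{a} head(\hat{y}-j) + \sum_{j=1}^{b} tail(\hat{y}+j)$, irrespective of the order in which the $a$ left-steps and $b$ right-steps are performed. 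Because $head(\cdot)$ is non-decreasing and $tail(\cdot)$ is non-increasing, the list of available left-steps $head(\hat{y}-1) \ge head(\hat{y}-2) \ge \cdots$ and the list of available right-steps $tail(\hat{y}+1) \ge tail(\hat{y}+2) \ge \cdots$ are each sorted in non-increasing order, and Algorithm~\ref{alg3} is precisely the greedy merge of these two sorted lists. A standard exchange argument for merging two non-increasingly sorted sequences then shows that, for every $k$, the interval produced after $k$ iterations realises the largest possible total risk reduction over all splits $k = a+b$, hence the smallest risk among all size-$k$ contiguous intervals containing $\hat{y}$. Optimality follows at once: if $m^*$ is the size of an optimiser $C^*_\lambda$ of (\ref{eq11}), then the algorithm's size-$m^*$ interval has risk at most $R(C^*_\lambda) \le \lambda$, so the stopping rule is reached no later than step $m^*$, giving $u(\lambda) - l(\lambda) \le m^*$; and since $C_\lambda$ is itself feasible (the stopping rule forces $R(C_\lambda) \le \lambda$) while $m^*$ is the minimum feasible size, we get $u(\lambda) - l(\lambda) = m^*$ and $C_\lambda$ is an optimiser. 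One can also phrase this as a contradiction argument paralleling the proof of Theorem~\ref{theoremoptimal1}, comparing $C_\lambda$ with a nested optimiser $C^*_\lambda$ at the threshold where they first disagree, the greedy choice of the larger per-step risk reduction being what forces the contradiction.

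I expect the exchange/merge step to be the part that needs the most care in the write-up rather than the part that is hard to believe: one must (i) record that left- and right-extensions commute, so that ``the risk of an interval containing $\hat{y}$'' is a well-defined function of how far the interval reaches on each side; (ii) dispose of the boundary cases where $l$ has already reached $0$ or $u$ has reached $K-1$, so that one of the two sorted lists is exhausted and the greedy is forced the other way, which is harmless; and (iii) be explicit that the tie-breaking rule ``$head(l-1) \ge tail(u+1) \Rightarrow$ go left'' is applied consistently, so that the greedy trajectory is a single deterministic increasing chain of intervals --- this is what makes the chain in the C3 argument unambiguous and what pins down the specific size-$k$ minimiser as the interval the algorithm actually returns. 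The remaining ingredients (monotonicity of $head$ and $tail$, right-continuity of $l(\lambda)$ and $u(\lambda)$, and the existence of a $\lambda_0$ with $C_{\lambda_0} = \mathcal{Y}$) are routine.
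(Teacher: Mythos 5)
Your proposal is correct, but it does not follow the paper's (largely implicit) route: the paper omits the proof of this theorem entirely, stating only that it is ``similar to the one for Theorem~\ref{theoremoptimal1}'', i.e.\ a contradiction argument that compares the greedy set $C_\lambda$ with a nested optimiser $C^*_\lambda$ at $\lambda_1 = \sup\{\lambda: C_\lambda = C^*_\lambda\}$ and derives a contradiction from how the two sets must grow just beyond $\lambda_1$. You instead prove a stronger intermediate claim directly: because left- and right-extensions commute and reduce the risk by $head(l-1)$ and $tail(u+1)$ respectively, with $head(\hat y-1)\ge head(\hat y-2)\ge\cdots$ and $tail(\hat y+1)\ge tail(\hat y+2)\ge\cdots$ forming two non-increasing lists, the greedy merge attains, for every $k$, the maximal cumulative risk reduction over all splits $a+b=k$, hence the minimal risk among all size-$k$ contiguous intervals containing $\hat y$; feasibility of any optimiser of (\ref{eq11}) then immediately bounds the greedy stopping time. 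This buys something real in the divergence-based setting, where (unlike the weight-based case) the per-step decrement depends on the current boundary, so the ``similar to Theorem~\ref{theoremoptimal1}'' adaptation is not entirely mechanical; your exchange/merge formulation makes explicit exactly why the greedy choice of the larger of $head(l-1)$ and $tail(u+1)$ is never regretted, and it also handles boundary exhaustion and ties cleanly. The paper's contradiction sketch is shorter but leaves these points to the reader; your argument is more complete and, in particular, yields the per-size optimality of the greedy trajectory as a reusable fact (which also makes the C3 nestedness argument transparent). Your verifications of C1--C3 and the final size-comparison step are all sound.
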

The proof is similar to the one for Theorem \ref{theoremoptimal1}, therefore is omitted here.

\section{Experiments}\label{sec:exp}
To demonstrate the effectiveness of the proposed algorithms, we evaluated them on three ordinal classification tasks, including a task to classify a simulated ordinal data, a task to predict the age group of the person based on their face images, and a task to predict the severity of patients' diabetic retinopathy based on their retina images\footnote{Codes for this work can be found at our GitHub repository: \href{https://github.com/yx8njit/ordinal-conformal-risk-control}{https://github.com/yx8njit/ordinal-conformal-risk-control}.}.

For the baseline method, we consider the method proposed by \citet{Lu2022}, which can be regarded as a special case of the weight-based risk, in which all classes have equal weights. To our knowledge, it is the only existing method that deals with conformal prediction for ordinal problems. 

For all experiments in this work, data are equally split for validation and for test. The softmax output of the ordinal classification model is fed as the input to our algorithm, and the results are averaged over 100 random trials.

\subsection{Simulated Data}
We simulated a 10-class ordinal data on a 2-D plate. Each class has 2,000 data points sampled from a Gaussian distribution, where the $i^{th}$ class is centered at the coordination $[i, i]$,  with a randomly generated covariance matrix. Figure \ref{fig:simulation_data} displays the distribution of the data. 

\begin{figure}[!htb]
  \centering
  \includegraphics[width=0.75\linewidth]{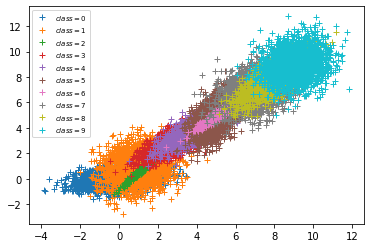}
  \caption{A simulated 10-class ordinal data}\label{fig:simulation_data}
\end{figure}

To classify the ordinal labels, we built a two-layer MLP with 50 neurons on the hidden layer. We use 14,000 data points for validation and test.

We evaluate the following 4 scenarios:
\begin{enumerate}
    \itemsep0em 
     \item [S1:] weight-based risk with equal weights on all classes.
    \item [S2:] weight-based risk with incremental weights on higher classes, where class $i$ has a weight $i$;
    \item [S3:] weight-based risk, with double weights on class 5$\sim$9;
    \item [S4:] divergence-based risk.
\end{enumerate}

For all scenarios, we evaluate the algorithm with different $\alpha$ values. The actual risks calculated are given in Table \ref{tab:sim-risk}, which shows that the proposed algorithm produces risk values very close to the $\alpha$ values for the first three scenarios. 

\begin{table}
    \centering
    \caption{Actual risk at different value of $\alpha$ on simulated data} \label{tab:sim-risk}
    \begin{tabular}{rllll}
      \toprule 
      \bfseries $\alpha$ & 0.02 & 0.08 & 0.14 & 0.20\\
      \midrule 
      S1 & 0.0199 & 0.0807 & 0.1403 & 0.1999\\
      S2 & 0.0201 & 0.0802 & 0.1402 & 0.2001\\
      S3 & 0.0201 & 0.0782 & 0.1406 & 0.1997\\
      S4 & 0.0199 & 0.0463 & 0.0463 & 0.0464\\
      \bottomrule 
    \end{tabular}
\end{table}

For the last scenario, the actual risk saturates around $0.0463$ since $\alpha=0.08$. This is due to the fact that the max divergence risk on this data set has been reached. Afterwards, the prediction set shrinks to a single class, therefore, the calculated risk no longer changes. 

Need to point out that although the divergence-based risk has the same value range of $[0, 1]$ as the weight-based risk, it generally has a much smaller max risk value. In fact, the risk value of 1 can only be reached if every data point in the dataset has a true label that is on the opposite side of the predicted set on the label space, which is impossible in practice. 

Figure \ref{fig:simulation_setsizes} illustrates the trend of prediction set sizes at different values of $\alpha$ for these scenarios. It can be seen that as the value of $\alpha$ increases, the set sizes reduce. For the divergence-based risk, the set size shrinks to 1 around $\alpha=0.05$, which aligns with the calculated max risk value $0.0463$. The figure also shows that, for the weight-based risk, both the incremental weights and the double weights scenarios have larger prediction sets compared with the equal weights scenario on the same $\alpha$ value. This is because the algorithm needs to extend the prediction set accordingly to reduce the overall risk. Therefore, there is a trade-off between the prediction set size and the risk reduction. 

\begin{figure}[!htb]
  \centering
  \includegraphics[width=0.75\linewidth]{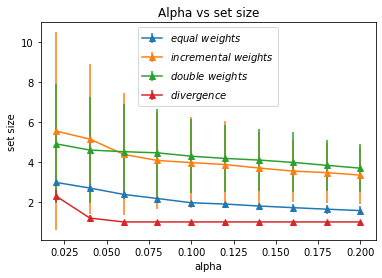}
  \caption{Prediction set sizes at different values of $\alpha$ on simulated data}\label{fig:simulation_setsizes}
\end{figure}

Figure \ref{fig:simulation-histograms} illustrates the risk distribution at a fixed $\alpha$ for different scenarios, where $\alpha$ is fixed at $0.10$ for the first three scenarios and at $0.02$ for the last scenario. It shows that for all the scenarios, the algorithm controls the risk well within narrow ranges around the specified $\alpha$ values, which are shown by the orange dotted lines in the graphs.  

\begin{figure}[ht]
\centering
\begin{subfigure}{.235\textwidth}
  \centering
  \includegraphics[width=1.05\linewidth]{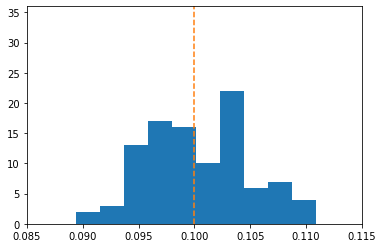}  
  \caption{equal weights\\ $\alpha=0.1$}
\end{subfigure}
\begin{subfigure}{.235\textwidth}
  \centering
  \includegraphics[width=1.05\linewidth]{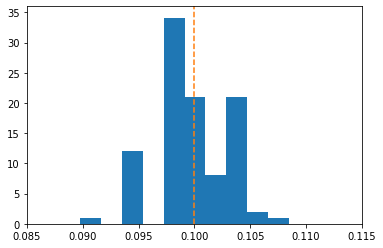}  
  \caption{incremental weights\\ $\alpha=0.1$}
\end{subfigure}
\begin{subfigure} {.235\textwidth}
  \centering
  \includegraphics[width=1.05\linewidth]{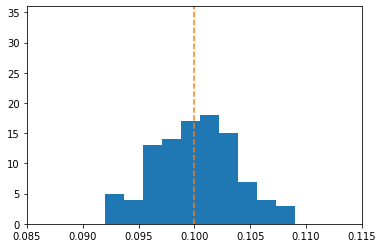}  
  \caption{double weights \\ $\alpha=0.1$}
\end{subfigure}
\begin{subfigure}{.235\textwidth}
  \centering
  \includegraphics[width=1.05\linewidth]{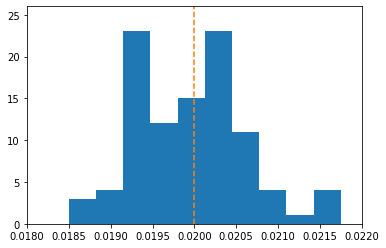}  
  \caption{divergence \\ $\alpha=0.02$}
\end{subfigure}
\caption{Risk distributions of different scenarios at a fixed $\alpha$ on the simulated dataset.} \label{fig:simulation-histograms}
\end{figure}

We also looked into the difference between the weight-based risk and the divergence-based risk. For this purpose, we compare their predicted class ranges. Due to their risk may take very different values, for a meaningful comparison, we carefully choose their $\alpha$ values so that their averaged prediction sets size are both 3. Then we compare the distribution of the centroids of their prediction sets, which are shown in Figure \ref{fig:simulation_centroids}. It can be seen that compared with the equal weights-based loss function, the divergence-based loss function tends to push the centroids toward the center of the label range. In other words, it is more centripetal than the weight-base risk. This can be explained by the fact that the divergence-based loss function punishes more on the extreme error cases where the true label lies on the opposite side of the prediction set.

\begin{figure}[!htb]
  \centering
  \includegraphics[width=0.75\linewidth]{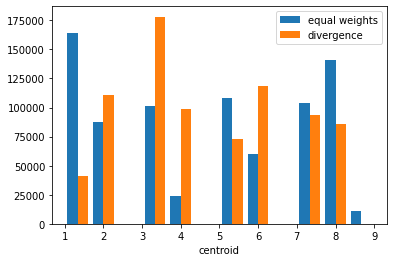}
  \caption{Distributions of prediction centroids on simulated data.}\label{fig:simulation_centroids}
\end{figure}

\subsection{Age Recognition}
For this task, we use the UTKFace dataset\footnote{The data is available at https://susanqq.github.io/UTKFace/.}, which is a large-scale face dataset with over 20K images that cover a long age span (range from 0 to 116 years old). Each image has been annotated of the person's age. In our work, we kept those below age 100, and discretized the age into 20 groups, where each group covers 5 years in range, i.e., group 0 is for $0\sim 4$, and group 1 is for $5\sim 9$ and so on. Here are some examples of the images and their group labels. 

\begin{figure}[ht]
\centering
\begin{subfigure}{.15\textwidth}
  \centering
  \includegraphics[width=0.7\linewidth]{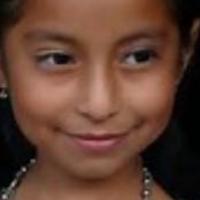}  
  \caption{age 8 \\ group 1}
\end{subfigure}
\begin{subfigure}{.15\textwidth}
  \centering
  \includegraphics[width=0.7\linewidth]{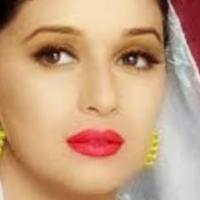}  
  \caption{age 25 \\ group 5}
\end{subfigure}
\begin{subfigure} {.15\textwidth}
  \centering
  \includegraphics[width=0.7\linewidth]{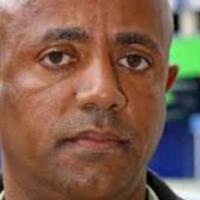}  
  \caption{age 42 \\ group 8}
\end{subfigure}
\begin{subfigure}{.15\textwidth}
  \centering
  \includegraphics[width=0.7\linewidth]{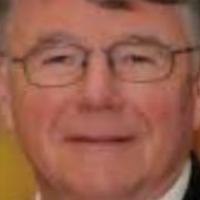}  
  \caption{age 67 \\ group 13}
\end{subfigure}
\begin{subfigure}{.15\textwidth}
  \centering
  \includegraphics[width=0.7\linewidth]{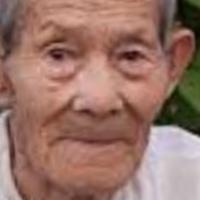}  
  \caption{age 86 \\ group 17}
\end{subfigure}
\caption{Image examples in the UTKFace dataset.} \label{fig:utkface-examples}
\end{figure}

We built an ordinal classifier to predict the age group using ResNet34 \citep{He2016}, where we didn’t particularly tune the hyper-parameters for a superb classification result. We use 17K images for validation and test, and evaluate the following scenarios:
\begin{itemize}
    \itemsep0em 
    \item S1: weighted risk, with equal weights on all classes
    \item S2: weighted risk, with double weights on class 0$\sim$3
    \item S3: weighted risk, with double weights on class 16$\sim$19
    \item S4: divergence risk
\end{itemize}

Here shows in Table \ref{tab:utkface-risk} the actual risks for different $\alpha$ values in these scenarios, which shows our algorithm produces the actual risks very close to the $\alpha$ value. It also shows that as $\alpha$ increases to certain values, the risks of the last two scenarios saturate to their max risk values. The trend of prediction set sizes over different $\alpha$ values and the risk distribution at a fixed $\alpha$ show similar conclusions as on the simulated data, and are therefore omitted here due to the length restriction. 
\begin{table}
    \centering
    \caption{Actual risks at different values of $\alpha$ on UTKFace}\label{tab:utkface-risk}
    \begin{tabular}{rllllll}
      \toprule 
      \bfseries $\alpha$ & 0.005 & 0.01 & 0.025 & 0.05 & 0.10 &0.20 \\
      \midrule 
      S1 & 0.005 & 0.010 & 0.025 & 0.050 & 0.100 & 0.201 \\
      S2 & 0.005 & 0.010 & 0.025 & 0.050 & 0.100 & 0.200 \\
      S3 & 0.005 & 0.010 & 0.025 & 0.050 & 0.100 & 0.169 \\
      S4 & 0.005 & 0.010 & 0.016 & 0.016 & 0.016 & 0.016 \\
      \bottomrule 
    \end{tabular}
\end{table}





We also fixed the prediction set size to be 3 and compared the distributions of the prediction set centroids for the different scenarios, which are shown in Figure \ref{fig:utkface_centroids}. From the distributions, we can see that by doubling the weights of the group $0\sim 3$, it pushes the distribution towards the lower age end, while by doubling the weights of the group $16\sim 19$, it pushes the distribution towards the higher age end. Similarly, the divergence loss function tends to be centripetal compared with other loss functions. 

\begin{figure}[!htb]
  \centering
  \includegraphics[width=0.75\linewidth]{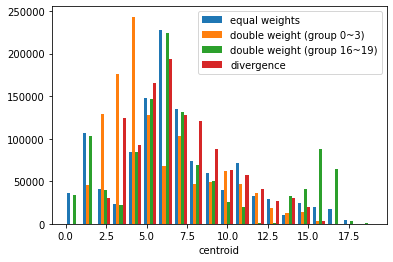}
  \caption{Distributions of prediction centroids on UTKFace}\label{fig:utkface_centroids}
\end{figure}

\subsection{Diabetic Retinopathy Detection}
For this task, we use the diabetic retinopathy detection dataset\footnote{The data is available at https://www.kaggle.com/\\competitions/diabetic-retinopathy-detection/data.}. It is a large set of over $35K$ retina images taken under a variety of imaging conditions. Each image has a clinician rating on the presence of diabetic retinopathy (DR) using a scale of 0 to 4, where 0 means no DR while 4 means a proliferative DR. Figure \ref{fig:dr-examples} shows some examples of the image along with their labels. 

\begin{figure}[ht]
\centering
\begin{subfigure}{.15\textwidth}
  \centering
  \includegraphics[width=0.9\linewidth]{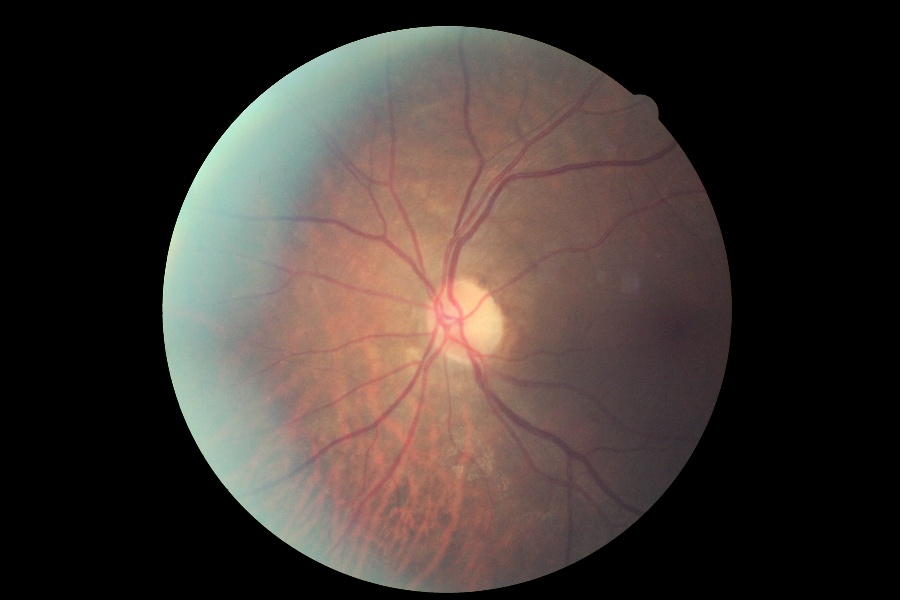}  
  \caption{0 - No DR}
\end{subfigure}
\begin{subfigure}{.15\textwidth}
  \centering
  \includegraphics[width=0.9\linewidth]{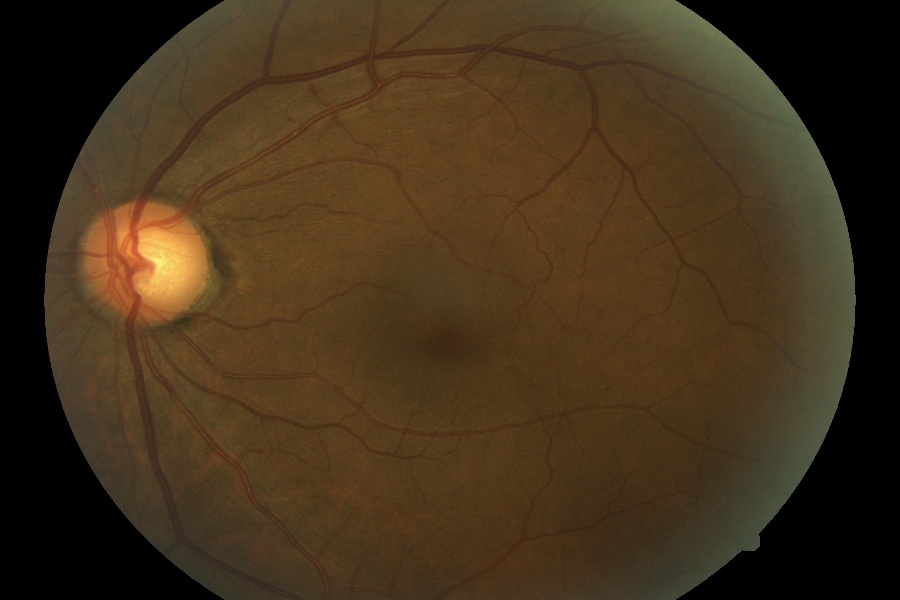}  
  \caption{1 - Mild}
\end{subfigure}
\begin{subfigure} {.15\textwidth}
  \centering
  \includegraphics[width=0.9\linewidth]{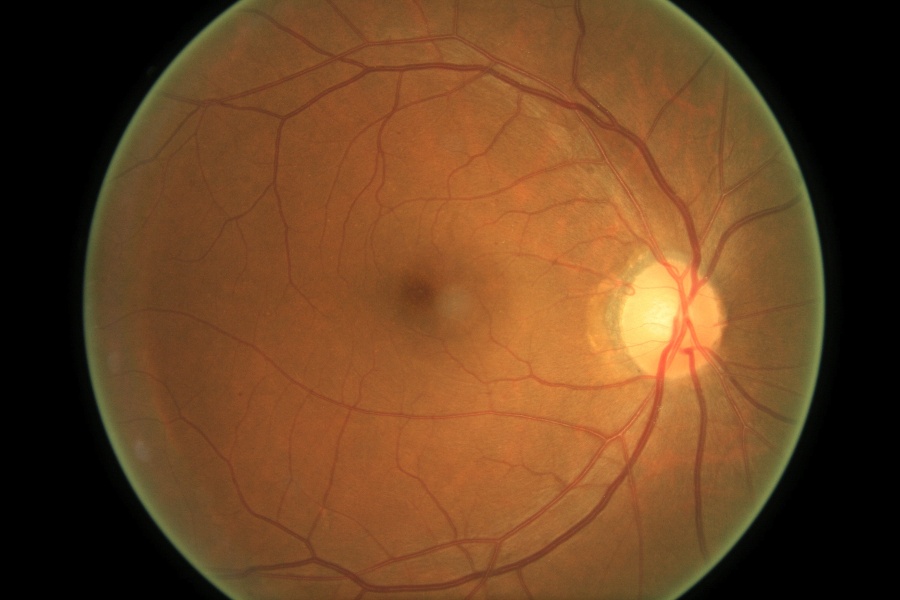}  
  \caption{2 - Moderate}
\end{subfigure}
\begin{subfigure}{.15\textwidth}
  \centering
  \includegraphics[width=0.9\linewidth]{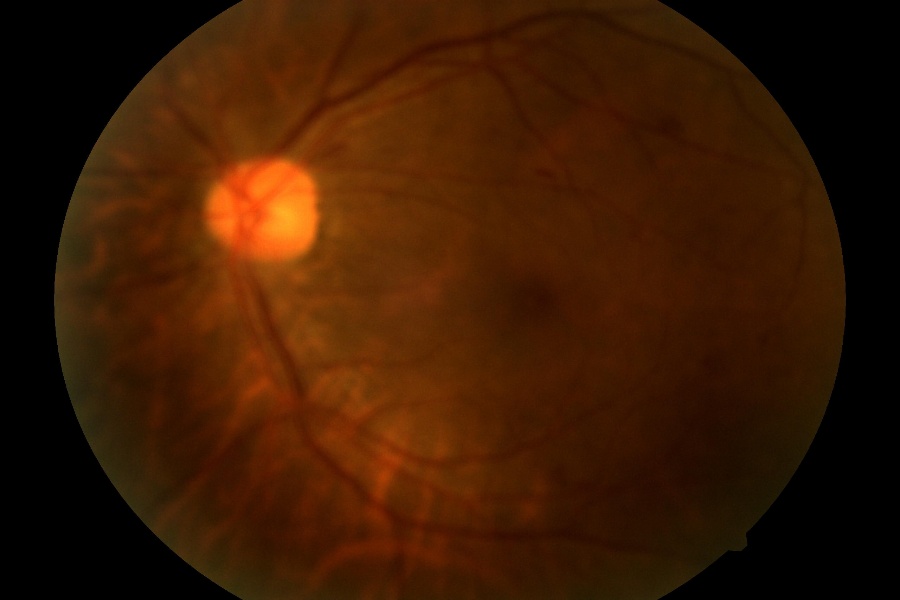}  
  \caption{3 - Severe}
\end{subfigure}
\begin{subfigure}{.15\textwidth}
  \centering
  \includegraphics[width=0.9\linewidth]{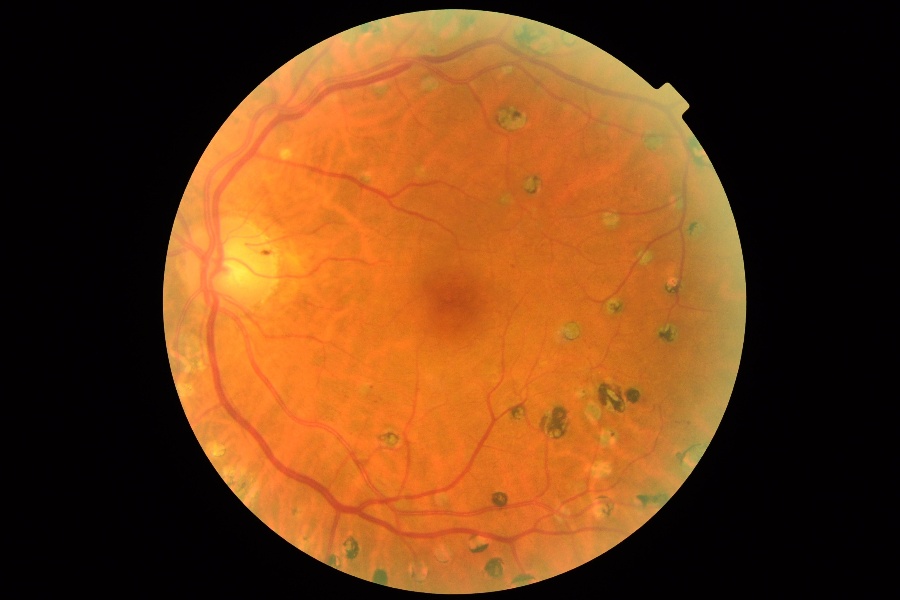}  
  \caption{4 - Prolifer.}
\end{subfigure}
\caption{Image examples in the DR dataset.} \label{fig:dr-examples}
\end{figure}

Similarly, we build an ordinal classifier to predict the age group using ResNet34, and do not particularly tune the hyper-parameters for a superb classification result. We use 19K images for validation and test, and evaluate the following scenarios:
\begin{itemize}
    \itemsep0em 
    \item S1: weighted risk, with equal weights on all classes
    \item S2: weighted risk, with double weights on class 3 \& 4
    \item S3: divergence risk
\end{itemize}

We show the expected risk for different alpha values of these scenarios in Table \ref{tab:dr-risk}. The trend of prediction set sizes over different $\alpha$ values and the risk distribution at a fixed $\alpha$ show similar conclusions as on the simulated data, and are therefore omitted here due to the length restriction. 

\begin{table}
    \centering
    \caption{Actual risk on different value of $\alpha$ on the DR dataset} \label{tab:dr-risk}
    \begin{tabular}{rllll}
      \toprule 
      \bfseries $\alpha$ & 0.02 & 0.08 & 0.14 & 0.20 \\
      \midrule 
      S1 & 0.0198 & 0.0794 & 0.1402 & 0.1499 \\
      S2 & 0.0199 & 0.0801 & 0.1401 & 0.1485 \\
      S3 & 0.0200 & 0.0430 & 0.0428 & 0.0428 \\
      \bottomrule 
    \end{tabular}
\end{table}
 


 We also fixed the prediction set size to be 2 and compared the distribution of the prediction set centroids for the different risk functions, as shown in Figure \ref{fig:dr_centroids}. Similarly, compared with the equal weights, doubling weights pushes the distribution slightly towards the higher end, while divergence risk pushes the distribution slightly towards the center.

 \begin{figure}[!htb]
  \centering
  \includegraphics[width=0.75\linewidth]{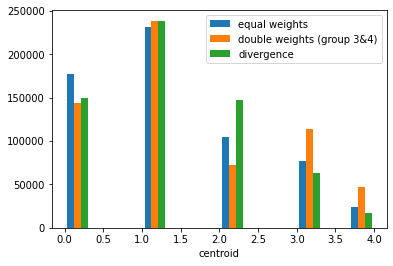}
  \caption{Distributions of prediction centroids on DR data}\label{fig:dr_centroids}
\end{figure}

\section{Conclusion \& Discussion}\label{sec:conclusion}
We formulated the ordinal classification task within the recently developed framework of conformal risk control in expectation, and introduced two types of loss functions specifically tailored to the learning task. Based on these two loss functions, we developed two conformal prediction algorithms, which are shown controlling the corresponding conformal risk at a desired level and are optimal in some sense. Simulation study and real data analysis showed effectiveness of the proposed algorithms. 

 With the proposed method, we can design appropriate weight functions accordingly to ensure the desired coverage for the specific task. Between the two types of the proposed loss functions, the weight-based risk is more suitable to the cases where we want to adjust the importance of certain classes, while the divergence-based risk is more suitable to the cases where we are more concerned with large prediction errors than the differences among the individual classes. By comparison, the divergence-based risk is more centripetal since it pushes the centroids of the prediction sets toward the center of the label range, while the weight-based risk pushes the centroids toward the classes where they have higher weights.

There are a few important questions remaining. Firstly, instead of the marginal coverage, how the choice of weight functions impacts the conditional coverage for each single class. Secondly, how to choose the weight function and the risk threshold that are fit to the specific problem. We will investigate these questions in our future work. 


\bibliographystyle{plainnat}
\bibliography{uai2023-template}

\newpage

\appendix

\section{A generalization of Theorem 2 in \cite{Angelopoulos2022}}

In the Supplementary Material, we will present a generalization of Theorem 2 in \cite{Angelopoulos2022} on the lower bound of conformal risk control, which is needed in the proof of Theorem 1 in our paper. This result itself might be of independent interest to other applications. For convenience, we use the same notations as in \cite{Angelopoulos2022} in the following discussion. 

Suppose that $\cC_\lambda: \cX \rightarrow 2^{\cY}$ is a given sequence of functions of an input $X \in \cX$ that outputs a prediction set $\cC(X) \subseteq \cY$, which is indexed by a threshold $\lambda \in \Lambda$, and $\cL(Y, \cC_\lambda(X))  \in (-\infty, B]$ be a given loss function of any observation $(X, Y)$ and the corresponding prediction set $\cC_\lambda(X)$. For the calibration observations $(X_i, Y_i)_{i=1}^n$ and the test observation $(X_{n+1}, Y_{n+1})$, let $L_i(\lambda) = L(Y_i, C_{\lambda}(X_i))$ for $i=1, \ldots, n+1$ and $\hat R_{n}(\lambda) = (L_1(\lambda) + \ldots + L_n(\lambda))/n$. The value of $\lambda$ is determined according to the following algorithm: 
$$\hat \lambda = \inf \Big\{\lambda: \frac{n}{n+1} \hat R_{n}(\lambda) +  \frac{B}{n+1} \le \alpha \Big\},$$
where $\alpha \in (0, B)$ is the given desired risk level upper bound. Let $D = \{\lambda: J(\hat R_{n+1}, \lambda) > 0\}$ denote the set of discontinuities in $\hat R_{n+1}$, where $J(\cL, \lambda)$ is the jump function defined below,
$$J(\cL, \lambda) = \lim_{\epsilon \rightarrow 0+}\cL(\lambda-\epsilon) - \cL(\lambda),$$
which quantifies the size of the discontinuity in the loss function $\cL$ at a point $\lambda$. For any $\lambda \in D$, define $$s(\lambda) = |\{i: J(L_i, \lambda) >0\}|,$$
the number of $L_i(\lambda)$ which are discontinuous at $\lambda$. Regarding $s(\lambda)$, we assume that 
$$\sup_{\lambda \in \Lambda} s(\lambda) \le M, \quad \text{almost surely},$$ 
where $M$ is a non-negative integer. Specifically, if $M = 0$, this assumption implies that for any $\lambda$, $P(J(L_i, \lambda) >0) = 0$ for $i = 1, \ldots, n+1$, which is the exactly original  discontinuity assumption in Theorem 2 of \cite{Angelopoulos2022}. 

 Under the above relaxed assumption, we generalize Theorem 2 in \cite{Angelopoulos2022} as follows. This result is applicable to the ordinal classification setting. 

\textit{\textbf{Theorem 4. } In the settings of Theorem 1 of \cite{Angelopoulos2022}, further assume that $L_i$ are i.i.d, $L_i >0$ and 
$$\sup_{\lambda \in \Lambda} s(\lambda) \le M, \quad \text{almost surely},$$ 
where $M$ is a non-negative integer. Then,
$$E[L_{n+1}(\hat \lambda) ] \ge \alpha - \frac{(M+2)B}{n+1}.$$}

\noindent Specifically, if $M = 0$, the above result reduces to Theorem 2 in \cite{Angelopoulos2022}. To show Theorem 4, we need to generalize Lemma 1 of \cite{Angelopoulos2022} as follows and then use the similar arguments as in the proof of Theorem 2 therein along with this lemma. 

\textit{\textbf{Lemma 1.}} In the settings of Theorem 4, any jumps in the empirical risk are bounded, i.e., 
$$\sup_{\lambda \in \Lambda} J(\hat R_{n}, \lambda) \le \frac{(M+1)B}{n}, \quad \text{almost surely}.$$ 

This lemma can be proved by using the similar arguments as in the proof of Lemma 1 of \cite{Angelopoulos2022}. 

\end{document}